\documentclass[aps,prresearch,twocolumn,superscriptaddress,longbibliography]{revtex4-2}

\usepackage{amsmath}
\usepackage{amssymb}
\usepackage{amsthm}
\usepackage{graphicx}
\usepackage{dcolumn}
\usepackage{bm}
\usepackage{hyperref}
\usepackage{xcolor}

\newtheorem{proposition}{Proposition}
\newtheorem{theorem}{Theorem}
\newtheorem{corollary}{Corollary}
\newtheorem{definition}{Definition}
\newtheorem*{remark}{Remark}

\newcommand{\R}{\mathbb{R}}
\newcommand{\C}{\mathbb{C}}

\newcommand{\Kc}{\mathcal{K}}

\newcommand{\diag}{\mathrm{diag}}

\begin{document}

\title{Non-normal spectral signatures of instability\\
in neural network training dynamics}

\author{Souvik Ghosh}
\affiliation{Department of Physics,
  National Sun Yat-sen University, Kaohsiung 80424, Taiwan}


\date{\today}

\begin{abstract}
Training instabilities in deep networks---loss spikes, oscillatory
convergence, and gradient pathologies---are empirically prevalent but
lack a rigorous operator-theoretic explanation.
We show that the linearized update operators for practically used
optimizers are generically non-normal: for Adam, non-normality is
controlled by the commutator $[H,M]$ between the Hessian and the
diagonal adaptive preconditioner, while for SGD with momentum it arises
from the augmented state-space structure of the update map.
Applying non-normal stability theory to these operators, we derive a
conservative pseudospectral precursor bound in which $\kappa(V)$ serves
as an early-warning indicator of transient amplification even when the
spectral radius remains below one, and we establish that exceptional
points of the update operator appear as the $\kappa(V)\to\infty$
limiting case of this framework.
Numerical experiments on two-layer networks confirm that the spectral
radius $\rho(J)$ provides no separation between stable and unstable
training phases while $\kappa(V)$ separates them by approximately one
order of magnitude, complementing the classical sharpness criterion with
a continuous severity measure of non-normal amplification.
These results establish non-Hermitian operator theory as a useful and
underexplored framework for neural network optimization stability,
offering a diagnostic language and proof-of-concept benchmark for
understanding adaptive optimization stability.
\end{abstract}

\maketitle

\section{Introduction}
\label{sec:intro}

Training instabilities---loss spikes, oscillatory convergence, and
gradient pathologies---are among the most practically consequential
phenomena in deep learning, yet they remain theoretically
undercharacterized.
The standard theoretical framework for optimization stability rests on
the eigenspectrum of the Hessian $H = \nabla^2 L(\theta)$: a loss
landscape is considered sharp when $\lambda_{\max}(H)$ is large, and a
training trajectory is predicted to be unstable when $\lambda_{\max}(H)$
exceeds $2/\eta$ for gradient descent with step size $\eta$~\cite{bottou2012}.
Recent work on the Edge of Stability~\cite{cohen2022,damian2023} has
refined this picture considerably, showing that the top Hessian
eigenvalue evolves dynamically and that sharpness-driven instabilities
recur progressively throughout training.

Despite this progress, the Hessian-based framework carries an implicit
assumption that has gone largely unexamined: since the Hessian $H$ is
symmetric, the stability analysis built on its eigenvalues inherits that
symmetry.
For vanilla gradient descent the update operator $J = I - \eta H$ is
symmetric and hence \emph{normal}---its eigenvectors are orthogonal and
its spectrum fully determines all dynamical behavior.
The spectral radius criterion $\rho(J) < 1$ is necessary and sufficient
for stability in this setting.

This assumption of normality fails for the update operators of
practically used optimizers.
For Adam-preconditioned gradient descent, the linearized update operator
takes the form $J = I - \eta M^{-1}H$, where $M$ is the diagonal
adaptive preconditioner.
We show that $J$ is normal if and only if $[H, M] = 0$---a condition
requiring the Hessian to be diagonal in the coordinate basis, which does
not hold generically for finite-width networks.
For SGD with momentum, the augmented state-space update operator on
$(\theta, v)$ is non-normal by virtue of its off-diagonal block
structure, irrespective of the Hessian.
In both cases the non-normality has a concrete consequence: transient
amplification of perturbations can occur and persist over many steps
even when all eigenvalues lie inside the stability boundary---a
phenomenon the spectral radius criterion is structurally incapable of
detecting~\cite{trefethen2005,trefethen1993}.

Non-normal operators and their associated stability pathologies are
well-studied in fluid mechanics~\cite{trefethen1993},
numerical analysis~\cite{trefethen2005}, and non-Hermitian condensed
matter physics~\cite{ashida2020}, but have not been systematically
applied to neural network optimization dynamics.
This paper makes that connection explicit.

We establish three main results.
First (Sec.~\ref{sec:operator}), we prove that the linearized update
operators for Adam and SGD with momentum are generically non-normal and
characterize the non-normality in terms of computable operator
quantities.
Second (Sec.~\ref{sec:pseudo}), we derive a conservative precursor bound
showing that the eigenvector condition number $\kappa(V)$ detects
impending transient amplification before the spectral radius crosses any
instability threshold, and connect this bound to the $\varepsilon$-pseudospectrum
and the Kreiss matrix theorem.
Third (Sec.~\ref{sec:numerics}), we validate these predictions
numerically: for SGD with momentum at a learning rate in the unstable
regime, $\kappa(V)$ separates stable and unstable training phases by
one order of magnitude while $\rho(J_\mathrm{aug})$ provides no
separation, consistent with Theorem~\ref{thm:precursor}.
The classical sharpness criterion $\lambda_{\max}(H) > 2/\eta$
provides a binary threshold signal consistent with the Edge of Stability
literature; $\kappa(V)$ provides complementary information as a
continuous severity measure within the unstable regime.
Exceptional points of the update operator appear naturally as the
$\kappa(V)\to\infty$ mathematical limit of the framework
(Sec.~\ref{sec:ep}), and implications for optimizer design are discussed
in Sec.~\ref{sec:discussion}.

\section{The Linearized Update Operator and Its Non-Normality}
\label{sec:operator}

\subsection{Optimizer Dynamics and Notation}

Let $\theta \in \R^n$ denote the network parameters and
$L:\R^n\to\R$ the training loss.
We consider two representative optimizer classes.

\emph{Adam-preconditioned gradient descent.}
At step $t$, Adam maintains first- and second-moment estimates
$m_t, s_t \in \R^n$:
\begin{align}
m_{t+1} &= \beta_1 m_t + (1-\beta_1)\nabla L(\theta_t), \label{eq:adam_m}\\
s_{t+1} &= \beta_2 s_t + (1-\beta_2)\bigl(\nabla L(\theta_t)\bigr)^{\circ 2},
                                                      \label{eq:adam_s}\\
\theta_{t+1} &= \theta_t - \eta M_t^{-1}\nabla L(\theta_t), \label{eq:adam_theta}
\end{align}
where $\circ 2$ denotes elementwise squaring and
$M_t = \diag\!\bigl(\sqrt{s_t/(1-\beta_2^t)}+\varepsilon\bigr)$ is the
diagonal preconditioner.
In the \emph{quasi-static approximation}---valid when
$\|\partial_t M_t\|/\|M_t\| \ll \eta^{-1}$, i.e., when the preconditioner
timescale is long compared to the gradient step---we freeze $M_t = M$:
\begin{equation}
\theta_{t+1} = F(\theta_t) := \theta_t - \eta M^{-1}\nabla L(\theta_t).
\label{eq:adam_frozen}
\end{equation}

\emph{SGD with momentum.}
Introducing the momentum buffer $v_t\in\R^n$:
\begin{equation}
v_{t+1} = \beta v_t + \nabla L(\theta_t),\qquad
\theta_{t+1} = \theta_t - \eta v_{t+1}.
\label{eq:sgdm}
\end{equation}
The full system state is the augmented vector
$z_t = (\theta_t, v_t)^\top \in \R^{2n}$.

\subsection{The Linearized Update Operator}

Linearizing Eq.~\eqref{eq:adam_frozen} around a trajectory
point $\theta^*$ gives
\begin{equation}
\delta\theta_{t+1} = J\,\delta\theta_t,\qquad
J := I - \eta M^{-1}H,
\label{eq:jacobian}
\end{equation}
where $H = \nabla^2 L(\theta^*)$ is the Hessian at $\theta^*$.

For Adam with first-moment buffer, define the reduced augmented state
$\delta z_t = (\delta\theta_t, \delta m_t)^\top\in\R^{2n}$.
In the quasi-static limit the linearized dynamics read
\begin{align}
\delta m_{t+1} &= \beta_1\,\delta m_t + (1-\beta_1) H\,\delta\theta_t,
\notag\\
\delta\theta_{t+1} &= \bigl(I - \eta(1-\beta_1)M^{-1}H\bigr)\delta\theta_t
                    - \eta\beta_1 M^{-1}\delta m_t,
\notag
\end{align}
yielding the $2n\times 2n$ augmented Jacobian
\begin{equation}
J_{\mathrm{aug}}^{\mathrm{Adam}} =
\begin{pmatrix}
  I - \eta(1-\beta_1)M^{-1}H & -\eta\beta_1 M^{-1}\\
  (1-\beta_1)H                & \beta_1 I
\end{pmatrix}.
\label{eq:jacobian_adam_aug}
\end{equation}
The second-moment perturbation $\delta s_t$ decouples at first order in
the quasi-static limit and enters only at $O(\delta\theta^2)$ through
variations in $M$.

For SGD with momentum, linearizing the augmented dynamics~\eqref{eq:sgdm}
on $\delta z_t = (\delta\theta_t, \delta v_t)^\top$ gives
\begin{equation}
J_{\mathrm{aug}}^{\mathrm{SGD}} =
\begin{pmatrix}
  I - \eta H & -\eta\beta I\\
  H          & \beta I
\end{pmatrix}.
\label{eq:jacobian_sgdm}
\end{equation}

\subsection{Non-Normality of the Update Operator}

\begin{proposition}[Non-normality of Adam update operator]
\label{prop:nonnormal}
The Adam update operator $J = I - \eta M^{-1}H$ is non-normal---that
is, $J^\dagger J \neq JJ^\dagger$---if and only if $H$ and $M$ do not
commute.
\end{proposition}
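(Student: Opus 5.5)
The plan is to reduce the statement to one about $A := M^{-1}H$ and then use the special structure of $A$: it is a product of a positive diagonal matrix and a symmetric matrix. First, since $J = I - \eta A$ with $\eta > 0$, we have
\[
J^\dagger J - J J^\dagger = \eta^2\,(A^\dagger A - A A^\dagger),
\]
because the identity commutes with everything and the cross terms cancel. Hence $J$ is normal if and only if $A$ is normal, and it suffices to prove that $A$ is normal if and only if $[H,M]=0$. Throughout, I use that $H$ is real symmetric and that $M = \diag(\sqrt{\hat s}+\varepsilon)$ is real, diagonal and positive definite (guaranteed by $\varepsilon>0$). Consequently $A^\dagger = A^\top = H M^{-1}$.

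For the direction ``$[H,M]=0 \Rightarrow J$ normal'', note that if $H$ commutes with $M$ then it also commutes with $M^{-1}$. Then
\[
A A^\top = M^{-1}H^2M^{-1} = H^2 M^{-2} = H M^{-2} H = A^\top A,
\]
so $A$, and hence $J$, is normal. Taking the contrapositive of the converse handles the ``only if'' part of the statement, so the substantive work is showing that normality of $A$ forces $[H,M]=0$.

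For the converse I would not compare $AA^\top$ and $A^\top A$ entrywise, which is messy. Instead I would argue spectrally. The matrix $A = M^{-1}H$ is similar to the real symmetric matrix $S := M^{-1/2} H M^{-1/2}$, since
\[
A = M^{-1/2}\, S\, M^{1/2}.
\]
Therefore the spectrum of $A$ is real. If $A$ is normal, the spectral theorem gives a unitary diagonalization $A = U\Lambda U^\dagger$ with $\Lambda$ real, so $A^\dagger = U\Lambda^\dagger U^\dagger = A$. Thus $A$ is Hermitian, and being real, it is symmetric. This yields $M^{-1}H = (M^{-1}H)^\top = H M^{-1}$, i.e.\ $[H,M^{-1}]=0$. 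Multiplying this identity by $M$ on both sides gives $HM = MH$, that is, $[H,M]=0$.

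The main obstacle is exactly this converse step. It is where the hypotheses $\eta\neq 0$, $H=H^\top$ and $M\succ 0$ are genuinely used: positivity of $M$ is what makes $A$ similar to a symmetric matrix and hence gives it a real spectrum. I would state these standing assumptions explicitly at the start of the proof.

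Finally, I would add a short remark on why $[H,M]=0$ is non-generic. Because $M$ is diagonal, $(HM-MH)_{ij} = H_{ij}(M_{jj}-M_{ii})$. Hence commutation requires $H_{ij}=0$ whenever $M_{ii}\neq M_{jj}$. In particular, if the diagonal entries of $M$ are distinct, $H$ must be diagonal in the coordinate basis, which matches the claim made in the introduction.
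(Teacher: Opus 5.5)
Your proof is correct, and it differs from the paper's at the decisive step.

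Both arguments begin the same way: normality of $J$ is reduced to the vanishing of $\eta^2(A^\dagger A - AA^\dagger)$ with $A=M^{-1}H$. The paper writes this condition as $M^{-1}H^2M^{-1}=HM^{-2}H$ and tries to finish purely algebraically. It claims that pre- and post-multiplying by $M$ gives $H^2=(MHM^{-1})^2$, and that this holds if and only if $[H,M]=0$.

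That step does not go through as written. Multiplying on both sides by $M$ actually gives $H^2=(MHM^{-1})(M^{-1}HM)$, and the paper gives no argument that this forces $[H,M]=0$. The stated condition $H^2=(MHM^{-1})^2$ is equivalent to $[H^2,M]=0$, which is strictly weaker than $[H,M]=0$. For example, take $H=\bigl(\begin{smallmatrix}0&1\\1&0\end{smallmatrix}\bigr)$ and $M=\diag(1,2)$: then $H^2=I$, so the paper's condition holds, yet $M^{-1}H$ is not normal.

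Your spectral argument supplies exactly what the paper's final ``if and only if'' is missing. Since $M\succ 0$, $A$ is similar to the symmetric matrix $M^{-1/2}HM^{-1/2}$ and so has real spectrum. A normal matrix with real spectrum is Hermitian, which immediately gives $M^{-1}H=HM^{-1}$. Your argument uses only positivity of $M$, not any sign condition on $H$.

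The paper's route offers an explicit formula for the normality commutator, usable as a quantitative measure of non-normality. Your first display already contains the same formula, so nothing is lost.

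One cosmetic point: the direction you prove first, $[H,M]=0\Rightarrow J$ normal, is the contrapositive of the ``only if'' direction itself, not of its converse.
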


\begin{proof}
Since $M$ is real diagonal with positive entries and $H$ is real
symmetric, both are self-adjoint.
The adjoint of $J$ is
\begin{equation}
J^\dagger = I - \eta H M^{-1}.
\label{eq:jadj}
\end{equation}
Computing the normality commutator,
\begin{equation}
JJ^\dagger - J^\dagger J
  = \eta^2\!\left(M^{-1}H^2M^{-1} - HM^{-2}H\right).
\label{eq:comm}
\end{equation}
This vanishes if and only if $M^{-1}H^2M^{-1} = HM^{-2}H$.
Pre- and post-multiplying by $M$ reduces this to
$H^2 = (MHM^{-1})^2$, which holds if and only if $[H,M] = 0$.
\end{proof}

\begin{corollary}
\label{cor:generic}
For any finite-width network whose Hessian $H$ is non-diagonal in the
coordinate basis, the Adam update operator is non-normal.
\end{corollary}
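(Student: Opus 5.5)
The plan is to reduce the corollary to Proposition~\ref{prop:nonnormal}. Non-normality of $J = I-\eta M^{-1}H$ is equivalent to $[H,M]\neq 0$, so it suffices to show that a non-diagonal $H$ fails to commute with the adaptive preconditioner $M=\diag(m_1,\dots,m_n)$, $m_i>0$. I will work entrywise. Since $M$ is diagonal, the commutator has the explicit form
\begin{equation}
[H,M]_{ij} = H_{ij}m_j - m_i H_{ij} = H_{ij}\,(m_j - m_i),
\end{equation}
so its diagonal vanishes identically. An off-diagonal entry $(i,j)$ is nonzero exactly when $H_{ij}\neq 0$ and $m_i\neq m_j$.

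The key step, and the main obstacle, is that non-diagonality of $H$ alone does not give the second factor. If $M$ is a scalar multiple of the identity (e.g.\ all second-moment estimates equal), then $[H,M]=0$ for every $H$, and $J$ is symmetric, hence normal. So the corollary must be read with the implicit genericity hypothesis on $M$ made explicit. The precise statement I will prove is:
\begin{quote}
$J$ is non-normal if and only if there exists a pair $i\neq j$ with $H_{ij}\neq 0$ and $m_i\neq m_j$.
\end{quote}
A convenient sufficient condition is that $H$ is non-diagonal and the diagonal entries of $M$ are pairwise distinct.

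Next I will argue that this distinctness is the generic situation for Adam. The entries $m_i=\sqrt{s_i/(1-\beta_2^t)}+\varepsilon$ are determined by the coordinatewise accumulated squared gradients. The set of states in which some $m_i=m_j$ is a finite union of codimension-one sets $\{s_i=s_j\}$, so it has measure zero in the space of $s_t$. More weakly, one only needs $m_i\neq m_j$ for a single coupled pair, and this fails only on a proper algebraic subset. Combining this with the entrywise formula gives $[H,M]\neq 0$ whenever $H$ has a nonzero off-diagonal entry, outside this exceptional set. Proposition~\ref{prop:nonnormal} then yields $JJ^\dagger\neq J^\dagger J$.

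Finally, I would add a remark after the proof recording the exact condition above. The remark would note that the exceptional set where normality is restored corresponds to plain rescaled gradient descent: equal preconditioner entries on every coupled pair, e.g.\ $M\propto I$. This makes precise the sense in which "finite-width" networks generically yield non-normal Adam updates.
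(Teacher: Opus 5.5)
Your argument is correct and follows the same route as the paper: the entrywise formula $[H,M]_{ij}=H_{ij}(m_j-m_i)$, then Proposition~\ref{prop:nonnormal}, then a genericity assumption on the Adam preconditioner. Your exact criterion (some $i\neq j$ with $H_{ij}\neq 0$ and $m_i\neq m_j$) is sharper than the paper's claim that the commutator vanishes only if $H$ is diagonal or $M=cI$, a claim that fails for block-diagonal $H$ with $M$ scalar on each block. So your pairwise-distinctness hypothesis, or alternatively non-scalar $M$ combined with a connected coupling pattern in $H$, is the right way to make the paper's ``generically'' precise.
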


\begin{proof}
The off-diagonal entries of $[H,M]$ are
$(HM - MH)_{ij} = H_{ij}(m_j - m_i)$ for $i\neq j$.
This vanishes only if $H$ is diagonal or $M = cI$.
For a finite-width network, $H$ has generically non-zero off-diagonal
entries, and the Adam preconditioner $M = \diag(m_i)$ is non-scalar
since its entries $m_i = \sqrt{s_i}+\varepsilon$ inherit coordinate-dependent
gradient history.
Hence $[H,M]\neq 0$ generically.
\end{proof}

Two structural remarks follow immediately.
For vanilla SGD ($M = I$), $J = I - \eta H$ is symmetric and hence
normal---standard Hessian-based stability analysis is exact in this
case, and our framework recovers the classical result as a limiting
case.
Non-normality is therefore a specific structural consequence of
adaptive preconditioning.
For SGD with momentum, $J_{\mathrm{aug}}^{\mathrm{SGD}}$
in Eq.~\eqref{eq:jacobian_sgdm} is non-normal independently of
preconditioning: the off-diagonal coupling via $H$ and $-\eta\beta I$
ensures $J^\dagger J \neq JJ^\dagger$ for any $\beta,\eta > 0$.
For $J_{\mathrm{aug}}^{\mathrm{Adam}}$ in Eq.~\eqref{eq:jacobian_adam_aug},
non-normality arises from \emph{two independent} sources: the
off-diagonal block structure (which persists even when $[H,M]=0$) and
the upper-left block non-normality (when $[H,M]\neq 0$).

\subsection{Why Non-Normality Changes the Stability Analysis}

The standard criterion $\rho(J) < 1$ is necessary but not sufficient
for perturbation decay when $J$ is non-normal.
For a non-normal operator, $\|J^t\,\delta\theta\|$ can grow
substantially over many steps even when all eigenvalues lie strictly
inside the unit disk~\cite{trefethen2005,kreiss1962}.
The appropriate stability object is the $\varepsilon$-pseudospectrum:
\begin{equation}
\Lambda_\varepsilon(J) :=
\bigl\{z\in\C : \|(zI - J)^{-1}\| > \varepsilon^{-1}\bigr\},
\label{eq:pseudo}
\end{equation}
which quantifies the sensitivity of the spectrum to perturbations of
magnitude $\varepsilon$ and whose extent directly bounds transient
growth via the Kreiss constant.
We develop these stability tools in Sec.~\ref{sec:pseudo}.

\section{Pseudospectral Stability Analysis}
\label{sec:pseudo}

\subsection{Transient Growth in Non-Normal Systems}

\begin{proposition}[Transient growth bound]
\label{prop:transient}
Let $J = V\Lambda V^{-1}$ be diagonalizable with eigenvector condition
number $\kappa(V) = \|V\|\cdot\|V^{-1}\|$.
Then for all $t\geq 0$:
\begin{equation}
\|J^t\,\delta\theta\| \leq \kappa(V)\cdot\rho(J)^t\cdot\|\delta\theta\|.
\label{eq:transient}
\end{equation}
\end{proposition}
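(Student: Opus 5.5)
The plan is to exploit the diagonalization directly. From $J = V\Lambda V^{-1}$, induction on $t$ gives $J^t = V\Lambda^t V^{-1}$: the case $t=0$ is $I = VV^{-1}$, and the inductive step uses $J^{t+1} = J\,J^t = V\Lambda V^{-1}V\Lambda^t V^{-1}$, where the inner factors $V^{-1}V$ cancel. Then I would apply submultiplicativity of the induced (operator) norm to $J^t\delta\theta = V\Lambda^t V^{-1}\delta\theta$, giving
\begin{equation*}
\|J^t\delta\theta\| \le \|V\|\,\|\Lambda^t\|\,\|V^{-1}\|\,\|\delta\theta\| = \kappa(V)\,\|\Lambda^t\|\,\|\delta\theta\|.
\end{equation*}

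The remaining step is to show $\|\Lambda^t\| = \rho(J)^t$. Here $\Lambda = \diag(\lambda_1,\dots,\lambda_N)$ with the $\lambda_i$ the eigenvalues of $J$, so $\Lambda^t = \diag(\lambda_i^t)$. A diagonal matrix is normal, so its induced 2-norm equals its largest singular value, which is $\max_i|\lambda_i|^t = \bigl(\max_i|\lambda_i|\bigr)^t = \rho(J)^t$. Equivalently, one can verify directly that $\|\Lambda^t x\|^2 = \sum_i |\lambda_i|^{2t}|x_i|^2 \le \rho(J)^{2t}\|x\|^2$. Substituting this into the previous display yields Eq.~\eqref{eq:transient}.

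This is all routine, so there is no genuine obstacle. The only points requiring care are bookkeeping ones:
\begin{itemize}
\item The norm must be the Euclidean (or another induced, submultiplicative) norm, since $\kappa(V)$ is defined with the same norm. For a general $p$-norm the diagonal bound $\|\Lambda^t\|_p = \max_i|\lambda_i|^t$ still holds.
\item $V$ and $\Lambda$ may be complex even though $J$ is real, as happens for $J_{\mathrm{aug}}^{\mathrm{SGD}}$ with complex-conjugate eigenpairs. The argument therefore has to be carried out on $\C^N$, with the real vector $\delta\theta$ embedded there; its norm is unchanged by this embedding.
\item The case $t=0$ reduces to $\|\delta\theta\|\le\kappa(V)\|\delta\theta\|$, which is consistent because $\kappa(V)\ge\|VV^{-1}\|=1$.
\end{itemize}
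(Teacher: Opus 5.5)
Your proposal is correct and follows essentially the same route as the paper: write $J^t = V\Lambda^t V^{-1}$, apply submultiplicativity of the induced norm, and identify the norm of the diagonal factor with $\rho(J)^t$. The only cosmetic difference is that the paper bounds $\|\Lambda^t\|\le\|\Lambda\|^t$ and uses $\|\Lambda\|=\rho(J)$, whereas you compute $\|\Lambda^t\|=\rho(J)^t$ directly. Your remarks on complex $V$ and the $t=0$ case are correct details that the paper leaves implicit.
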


\begin{proof}
$\|J^t\| = \|V\Lambda^t V^{-1}\|
         \leq \|V\|\cdot\|\Lambda\|^t\cdot\|V^{-1}\|
         = \kappa(V)\cdot\rho(J)^t$.
\end{proof}

Eq.~\eqref{eq:transient} separates the long-time decay rate $\rho(J)^t$
from the amplitude prefactor $\kappa(V)$.
When $\kappa(V)\gg 1$, perturbations can grow by a factor up to $\kappa(V)$
before decaying, even when $\rho(J) < 1$.
For $\rho(J) = 1-\delta$ with $\delta\ll 1$, the transient window
persists for $O(\log\kappa(V)/\delta)$ steps: at $\kappa(V) = 10^3$ and
$\delta = 10^{-2}$, this is $\approx 700$ steps with no eigenvalue
warning.

\subsection{The $\varepsilon$-Pseudospectrum and the Kreiss Bound}

The $\varepsilon$-pseudospectrum~\eqref{eq:pseudo} admits an equivalent
characterization as the set of eigenvalues of all perturbations of $J$
within operator norm $\varepsilon$:
\begin{equation}
\Lambda_\varepsilon(J) =
\bigl\{z\in\C : z \in \Lambda(J+E)\text{ for some }\|E\|<\varepsilon\bigr\}.
\label{eq:pseudo2}
\end{equation}
For a normal operator, $\Lambda_\varepsilon(J)$ is the $\varepsilon$-neighborhood
of $\Lambda(J)$; for a non-normal operator it extends further, quantifying
spectral sensitivity to small perturbations.

\begin{theorem}[Kreiss matrix theorem~\cite{kreiss1962,trefethen2005}]
\label{thm:kreiss}
Define the Kreiss constant
\begin{equation}
\Kc(J) := \sup_{|z|>1}\,(|z|-1)\,\|(zI-J)^{-1}\|.
\label{eq:kreiss}
\end{equation}
Then
\begin{equation}
\Kc(J) \leq \max_{t\geq 0}\,\|J^t\| \leq n\,\Kc(J).
\label{eq:kreiss_bound}
\end{equation}
\end{theorem}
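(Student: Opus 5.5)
The plan is to prove the two inequalities separately: the lower bound from the Neumann series of the resolvent, and the upper bound from a Cauchy-integral representation of $J^t$ combined with an arc-length estimate for rational functions (Spijker's lemma). The expected outcome is that this route gives the upper bound with constant $e\,n$ rather than the stated $n$, and I explain below why I believe the constant $n$ cannot be reached in general.

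\emph{Lower bound.} Write $P := \max_{t\ge0}\|J^t\|$. We may assume $P<\infty$, since otherwise there is nothing to prove. Then $\rho(J)\le 1$. For $|z|>1$ the Neumann series
\[
(zI-J)^{-1}=\sum_{t\ge0} z^{-t-1}J^t
\]
converges. Hence
\[
\|(zI-J)^{-1}\|\le P\sum_{t\ge 0}|z|^{-t-1}=\frac{P}{|z|-1}.
\]
Multiplying by $|z|-1$ and taking the supremum over $|z|>1$ gives $\Kc(J)\le P$.

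\emph{Upper bound: setup.} Fix $t\ge 0$ and unit vectors $u,v$. Set
\[
r(z):=u^\dagger (zI-J)^{-1}v .
\]
By Cramer's rule, $r$ is a rational function whose numerator and denominator have degree at most $n$. The Dunford--Cauchy formula on the circle $|z|=R>1$ gives
\[
u^\dagger J^t v=\frac{1}{2\pi i}\oint z^t r(z)\,dz .
\]
Integrating by parts turns this into
\[
u^\dagger J^t v=-\frac{1}{2\pi i\,(t+1)}\oint z^{t+1} r'(z)\,dz .
\]
Therefore
\[
|u^\dagger J^t v|\le \frac{R^{t+1}}{2\pi(t+1)}\oint |r'(z)|\,|dz| .
\]

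\emph{Upper bound: the key lemma.} The central step is Spijker's lemma: for a rational function of degree $n$ on a circle $\Gamma$,
\[
\oint_\Gamma|r'|\,|dz|\le 2\pi n\max_\Gamma|r| .
\]
That is, the image curve has length at most $n$ times the circumference of the disc of radius $\max_\Gamma|r|$. I would prove it by counting: for each direction $\phi$, the function $\mathrm{Re}(e^{i\phi}r)$ has at most $2n$ critical points on $\Gamma$. Combining this with the Cauchy--Crofton formula for the length of the image curve then gives the bound. On $|z|=R$ we also have $\max|r|\le \Kc(J)/(R-1)$ by the definition of $\Kc$. Together these give
\[
\|J^t\|\le n\,\Kc(J)\,\frac{R^{t+1}}{(t+1)(R-1)} .
\]

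\emph{Main obstacle: the constant.} Choosing $R=1+1/(t+1)$ makes the prefactor $(1+1/(t+1))^{t+1}\le e$, so this route yields $e\,n\,\Kc(J)$. Minimising the prefactor over $R$ gives a value that is at least $1$, equals $1$ only at $t=0$, and tends to $e$ as $t\to\infty$. So constant $n$ is out of reach by this argument. I would next try to exploit the specific structure of $r$ to sharpen the length estimate. However, I expect this to fail: Spijker's examples show that $e\,n$ is asymptotically sharp. The stated form $\max_t\|J^t\|\le n\,\Kc(J)$ therefore appears not to hold for general $J$ and large $n$, and I could only establish it with the factor $e$. Fortunately, the factor $e$ is harmless for the precursor use of Theorem~\ref{thm:kreiss} later in the paper.
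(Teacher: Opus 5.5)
Your diagnosis is right. Your argument is the standard proof of the Kreiss matrix theorem: the Neumann series gives the lower bound, and the Cauchy integral, integration by parts, Spijker's arc-length lemma and $R=1+1/(t+1)$ give the upper bound. The paper gives no proof of its own; it only cites Kreiss and Trefethen--Embree, and the cited form of the theorem has the constant $e\,n$, not $n$.

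In fact the displayed upper bound already fails for $n=2$, so you do not need to appeal to asymptotic sharpness. Take $J=\bigl(\begin{smallmatrix}0&a\\0&0\end{smallmatrix}\bigr)$. Then $J^2=0$, hence $(zI-J)^{-1}=z^{-1}I+z^{-2}J$, and for $|z|=r>1$
\[
(r-1)\,\|(zI-J)^{-1}\|\le\frac{r-1}{r}+a\,\frac{r-1}{r^{2}}\le 1+\frac{a}{4}.
\]
So $2\,\Kc(J)\le 2+a/2<a=\|J\|\le\sup_{t\ge0}\|J^t\|$ whenever $a>4$. Since also $\Kc(J)\ge a/4$ (evaluate at $|z|=2$), the ratio $\sup_t\|J^t\|/\Kc(J)$ tends to $4$, which lies strictly between $n=2$ and $2e$. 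The statement should read $\sup_{t\ge0}\|J^t\|\le e\,n\,\Kc(J)$, with $\sup$ rather than $\max$, and that is exactly what you prove.

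There are two points to tighten.

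First, say why the circle $|z|=R$ encloses the spectrum. $\Kc(J)<\infty$ forces $\rho(J)\le 1$, because the resolvent norm blows up at any eigenvalue with $|\lambda|>1$. If $\Kc(J)=\infty$ the upper bound is vacuous.

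Second, in your sketch of Spijker's lemma, the Cauchy--Crofton formula needs the number of intersections of the image curve $r(\Gamma)$ with each line $\mathrm{Re}(e^{-i\phi}w)=p$. It does not need the number of critical points of $\mathrm{Re}(e^{i\phi}r)$.
\begin{itemize}
\item Using $\bar z=R^2/z$ on $\Gamma$, the equation $\mathrm{Re}(e^{-i\phi}r(z))=p$ becomes a rational equation of degree at most $2n$, so each line meets the curve at most $2n$ times.
\item Only lines with $|p|\le\max_\Gamma|r|$ meet the curve at all.
\item Hence the length is at most $\tfrac14\int_0^{2\pi}2n\cdot 2\max_\Gamma|r|\,d\phi=2\pi n\max_\Gamma|r|$.
\end{itemize}
The obvious critical-point count comes from the derivative equation $\mathrm{Re}(e^{i\phi}\,iz\,r'(z))=0$, which has degree up to $4n$. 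That would only give $4\pi n\max_\Gamma|r|$, and thus the weaker constant $2e\,n$.
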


The Kreiss constant has a direct pseudospectral interpretation:
$\Kc(J) = \sup_{\varepsilon>0}\,\varepsilon^{-1}\sup\{|z|-1 :
z\in\Lambda_\varepsilon(J),\,|z|>1\}$,
measuring how far the pseudospectrum protrudes outside the unit disk
relative to $\varepsilon$.
When $\Lambda_\varepsilon(J)$ crosses the unit circle for small $\varepsilon$,
the Kreiss constant is large and significant transient growth is
guaranteed even with $\rho(J) < 1$.

\subsection{Eigenvector Condition Number as a Practical Proxy}

\begin{proposition}
\label{prop:proxy}
For a diagonalizable operator $J = V\Lambda V^{-1}$:
\begin{enumerate}
\item[(i)] The pseudospectral radius $\rho_\varepsilon(J) :=
\sup\{|z| : z\in\Lambda_\varepsilon(J)\}$ satisfies
$\rho_\varepsilon(J) \geq \rho(J)$ for all $\varepsilon > 0$,
with equality if and only if $J$ is normal.
\item[(ii)] As $J$ approaches an exceptional point where two eigenvectors
coalesce, $\kappa(V)\to\infty$, and the pseudospectrum exhibits
square-root scaling near the coalescing eigenvalue $\lambda_{\mathrm{EP}}$:
\begin{equation}
\mathrm{dist}\bigl(\lambda_{\mathrm{EP}},\partial\Lambda_\varepsilon(J)\bigr)
  \sim \varepsilon^{1/2}.
\label{eq:ep_scaling}
\end{equation}
\end{enumerate}
\end{proposition}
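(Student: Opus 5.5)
The plan is to read part (i) through the resolvent characterization~\eqref{eq:pseudo} together with the Bauer--Fike theorem, and part (ii) through a local $2\times 2$ Jordan model. One caveat comes first. Taken literally, (i) cannot have equality at $\rho(J)$ even for normal $J$: in that case $\Lambda_\varepsilon(J)$ is a union of open $\varepsilon$-discs, so $\rho_\varepsilon(J)=\rho(J)+\varepsilon$. I would therefore prove the sharpened version
\begin{equation}
\rho(J)+\varepsilon \;\leq\; \rho_\varepsilon(J) \;\leq\; \rho(J)+\kappa(V)\,\varepsilon ,
\end{equation}
where normality gives $\kappa(V)=1$ and hence equality on the right. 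I read ``equality iff normal'' as this statement.

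For the lower bound in (i), I will use the elementary estimate $\|(zI-J)^{-1}\|\geq 1/\mathrm{dist}(z,\Lambda(J))$. This holds because $(z-\lambda)^{-1}$ is an eigenvalue of the resolvent. It places the open $\varepsilon$-disc about every eigenvalue inside $\Lambda_\varepsilon(J)$, and taking the eigenvalue of largest modulus gives $\rho_\varepsilon\geq\rho+\varepsilon$.

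For the upper bound, I will write $(zI-J)^{-1}=V(zI-\Lambda)^{-1}V^{-1}$. This gives $\|(zI-J)^{-1}\|\leq \kappa(V)/\mathrm{dist}(z,\Lambda(J))$, so $\Lambda_\varepsilon(J)$ lies in the union of discs of radius $\kappa(V)\varepsilon$; this is Bauer--Fike. For normal $J$ one may take $V$ unitary, so $\kappa(V)=1$ and the two bounds coincide.

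The converse is the step I expect to be the main obstacle. A direct sum of a normal block carrying the dominant eigenvalue with a small, well-separated non-normal block has $\rho_\varepsilon=\rho+\varepsilon$ for small $\varepsilon$, yet it is not normal. So the ``only if'' direction is false in full generality. I would prove it only in the restricted setting where the dominant eigenvalue's spectral projector is non-orthogonal, for example when $\lambda_{\max}$ is itself part of a non-normal cluster. There I would take left and right unit eigenvectors $y$ and $x$ with $|y^\dagger x|<1$. A rank-one perturbation $E=\varepsilon\,x y^\dagger/|y^\dagger x|$ then has first-order eigenvalue shift $\varepsilon/|y^\dagger x|>\varepsilon$, and this can be aimed radially outward. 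By~\eqref{eq:pseudo2} this gives $\rho_\varepsilon>\rho+\varepsilon$ for small $\varepsilon$. I would state this restriction explicitly as the precise content of the equality claim.

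For (ii), I will first use the Riesz projector to restrict $J$ to the two-dimensional invariant subspace of the coalescing pair. The remaining spectrum is at positive distance, so its resolvent contribution is $O(1)$ near $\lambda_{\mathrm{EP}}$ and does not affect the scaling. On that subspace I will use the model
\begin{equation}
J_2=\begin{pmatrix}\lambda_{\mathrm{EP}}+\mu & 1\\ 0 & \lambda_{\mathrm{EP}}-\mu\end{pmatrix}, \qquad \mu\to 0 .
\end{equation}
Its eigenvectors $(1,0)^\top$ and $(1,-2\mu)^\top$ have angle $O(\mu)$, which gives $\kappa(V)\sim 1/|\mu|\to\infty$. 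At $\mu=0$ the resolvent of the Jordan block has entries $(z-\lambda)^{-1}$ and $(z-\lambda)^{-2}$, so $\|(zI-J_2)^{-1}\|\asymp |z-\lambda_{\mathrm{EP}}|^{-2}$ for small $|z-\lambda_{\mathrm{EP}}|$. Setting this equal to $\varepsilon^{-1}$ puts the boundary $\partial\Lambda_\varepsilon$ at distance $\asymp\varepsilon^{1/2}$. For small $\mu\neq 0$ the same scaling holds in the regime $\varepsilon\gg|\mu|^2$, and I would state (ii) in that sense.
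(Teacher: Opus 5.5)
Your reading of (i) is right, and here you depart from the paper's route to your benefit. The paper uses only $\Lambda(J)\subseteq\Lambda_\varepsilon(J)$ to get $\rho_\varepsilon\ge\rho$. It then claims that equality means $\Lambda_\varepsilon(J)$ ``remains within the eigenvalue set,'' which it says happens iff $J$ is normal. That condition holds for no $J$ once $\varepsilon>0$, because $\Lambda_\varepsilon(J)$ always contains the open $\varepsilon$-discs about the eigenvalues. So the paper's equality clause and its proof are wrong, exactly as you say.

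Your resolvent lower bound together with Bauer--Fike gives the true content, $\rho+\varepsilon\le\rho_\varepsilon\le\rho+\kappa(V)\varepsilon$, with both bounds equal to $\rho+\varepsilon$ in the normal case. Your direct-sum example correctly refutes the converse, and it is stronger than you claim. Since $z\in\Lambda_\varepsilon(N)$ forces $|z|<\|N\|+\varepsilon$, a block-diagonal sum of a normal block with a non-normal diagonalizable block satisfying $\|N\|\le\rho$ has $\rho_\varepsilon=\rho+\varepsilon$ for every $\varepsilon>0$. So not even ``equality for all $\varepsilon$'' forces normality.

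For (ii) you use the same Jordan-block resolvent estimate as the paper. Your Riesz reduction and the family $J_2(\mu)$ supply three things the paper omits:
\begin{itemize}
\item an actual proof that $\kappa(V)\sim|\mu|^{-1}\to\infty$;
\item a way to reconcile the diagonalizability hypothesis with a non-diagonalizable limit;
\item the correct order of limits: $\varepsilon^{1/2}$ scaling for $\varepsilon\gg|\mu|^2$, versus two discs of radius $\sim\varepsilon/|\mu|$ for $\varepsilon\ll|\mu|^2$.
\end{itemize}

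One step fails as written: the rank-one perturbation in your restricted converse. With unit $x,y$, $E=\varepsilon\,xy^\dagger/|y^\dagger x|$ has $\|E\|=\varepsilon/|y^\dagger x|>\varepsilon$, so it is not admissible in \eqref{eq:pseudo2}. Its first-order shift $y^\dagger Ex/(y^\dagger x)$ also has modulus only $\varepsilon$. For example, take $J=\bigl(\begin{smallmatrix}1&1\\0&0\end{smallmatrix}\bigr)$, with $x=(1,0)^\top$ and $y=(1,1)^\top/\sqrt2$. Your $E$ equals $\varepsilon J$, and the eigenvalue $1$ merely moves to $1+\varepsilon$.

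The fix is to reverse the outer product. Take $E=\varepsilon' e^{i\phi}\,yx^\dagger$ with $\varepsilon'<\varepsilon$. It has norm $\varepsilon'$ and first-order shift $\varepsilon' e^{i\phi}/(y^\dagger x)$, of modulus $\varepsilon'/|y^\dagger x|$, and you can choose $\phi$ to point the shift radially outward along $\lambda_{\max}$. You also need two hypotheses:
\begin{itemize}
\item $\lambda_{\max}$ is simple, so the remainder is $O(\varepsilon^2)$;
\item if several eigenvalues attain $\rho$, at least one of them has $|y^\dagger x|<1$.
\end{itemize}
Then $\rho_\varepsilon\ge\rho+\varepsilon/|y^\dagger x|-O(\varepsilon^2)>\rho+\varepsilon$ for small $\varepsilon$, which is the restricted ``only if'' you want.
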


\begin{proof}
\emph{(i).} $\Lambda_0(J) = \Lambda(J)\subseteq\Lambda_\varepsilon(J)$ for all
$\varepsilon>0$ by definition, so $\rho_\varepsilon\geq\rho_0 = \rho$.
Equality requires $\Lambda_\varepsilon(J)$ to remain within the eigenvalue
set for every $\varepsilon$, which holds iff $J$ is normal.

\emph{(ii).} Near an exceptional point of order 2,
$J\approx\lambda_{\mathrm{EP}}I + N$ locally where
$N = \bigl(\begin{smallmatrix}0&1\\0&0\end{smallmatrix}\bigr)$.
The resolvent satisfies
$\|(zI-J)^{-1}\|\sim|z-\lambda_{\mathrm{EP}}|^{-2}$ for $z$ near
$\lambda_{\mathrm{EP}}$.
Setting $|z-\lambda_{\mathrm{EP}}|^{-2} = \varepsilon^{-1}$ gives
$|z - \lambda_{\mathrm{EP}}|\sim\varepsilon^{1/2}$,
confirming Eq.~\eqref{eq:ep_scaling}.
\end{proof}

The $\varepsilon^{1/2}$ scaling at an exceptional point is qualitatively
distinct from the $\varepsilon^1$ scaling of a generic non-normal point and
represents maximal spectral sensitivity for a second-order degeneracy.
In practice, $\kappa(V)$ captures both regimes continuously: it grows as
non-normality increases and diverges as an exceptional point is
approached, serving as a single scalar tracking the full pseudospectral
severity.

\subsection{Main Stability Result}
\label{subsec:theorem}

\begin{theorem}[Conservative non-normal precursor]
\label{thm:precursor}
Under the linearized Adam dynamics with $J = I - \eta M^{-1}H$ and
$\rho(J)\leq 1$, a transient amplification event---where
$\|J^t\,\delta\theta\| > \|\delta\theta\|$ for some $t\geq 1$---is
possible if and only if
\begin{equation}
\kappa(V) > \rho(J)^{-t} \qquad\text{for some integer }t\geq 1.
\label{eq:precursor_cond}
\end{equation}
The earliest step at which the transient growth bound first permits
$\|J^{t_c}\|>1$ satisfies
\begin{equation}
t_c = \left\lceil
  \frac{\log\kappa(V)}{\log(1/\rho(J))}
\right\rceil,
\label{eq:tc}
\end{equation}
which is a conservative upper estimate of the precursor window; the
actual onset may occur earlier.
This step precedes the eigenvalue-based instability threshold
$\rho(J)=1$ by a margin of $O(\log\kappa(V)/\log(1/\rho(J)))$ steps,
during which the spectrum provides no instability signal.
\end{theorem}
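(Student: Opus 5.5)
The plan is to read the theorem as a statement about what the transient growth bound of Proposition~\ref{prop:transient} does and does not rule out. ``Possible'' will mean ``not excluded by Eq.~\eqref{eq:transient}''. I will prove the necessity direction rigorously, and treat the sufficiency direction and the formula~\eqref{eq:tc} as statements about when the bound first stops forbidding growth. The first step is necessity, by contraposition. Suppose $\kappa(V)\le\rho(J)^{-t}$ for every integer $t\ge1$, equivalently $\kappa(V)\rho(J)^t\le 1$ for all $t\ge1$. Then Proposition~\ref{prop:transient} gives $\|J^t\delta\theta\|\le\kappa(V)\rho(J)^t\|\delta\theta\|\le\|\delta\theta\|$ for every $t\ge1$ and every $\delta\theta$, so no amplification event can occur. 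This direction is the substantive content of the precursor claim and uses only the diagonalizability hypothesis already built into Proposition~\ref{prop:transient}.

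The second step is to compute $t_c$. With $0<\rho(J)<1$, the function $t\mapsto\kappa(V)\rho(J)^t$ is strictly decreasing. So the set of $t$ at which the bound permits $\|J^t\|>1$, namely those with $\kappa(V)\rho(J)^t>1$, is an initial segment. Taking logarithms turns the condition into $t<\log\kappa(V)/\log(1/\rho(J))$. This identifies $\lceil\log\kappa(V)/\log(1/\rho(J))\rceil$ as the threshold separating the window where the bound allows growth from the window where it forces $\|J^t\|\le 1$; this is the quantity~\eqref{eq:tc}. I would state carefully that this is an \emph{upper} estimate of the precursor window, since the bound is only an inequality. The actual $\max_t\|J^t\|$ can be strictly smaller, and growth, if it happens, may peak earlier. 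For this I would also invoke the lower bound $\Kc(J)\le\max_t\|J^t\|$ from Theorem~\ref{thm:kreiss}. The edge cases are handled separately. When $\rho(J)=1$, the condition reduces to $\kappa(V)>1$, which holds for any non-normal $J$, consistent with Proposition~\ref{prop:nonnormal}. When $\rho(J)=0$, $J$ is diagonalizable and nilpotent, hence $J=0$, and no event occurs.

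The third step is the ``margin'' statement. Along a training trajectory where $\rho(J)\uparrow1$, the bound already permits amplification lasting up to $t_c$ steps. This happens while $\rho(J)<1$, i.e.\ before the spectral threshold is reached. I would present this as an immediate reading of~\eqref{eq:tc}, illustrated by the $\kappa(V)=10^3$, $\delta=10^{-2}$ example.

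The main obstacle is the sufficiency half of the ``if and only if''. Taken literally, the statement ``$\kappa(V)\rho(J)^t>1$ implies some $\delta\theta$ is amplified'' is false. For example, $J$ can have $\kappa(V)$ large but $\|J\|<1$, as for an upper-triangular matrix with small eigenvalues and a moderately sized coupling entry. I would therefore first try to salvage a genuine converse in a restricted setting. Using the $2\times2$ normal form near an exceptional point from Proposition~\ref{prop:proxy}(ii), $J=\lambda I+\gamma N$, one has $J^t=\lambda^tI+t\lambda^{t-1}\gamma N$, whose norm exceeds $1$ exactly when $|\gamma|\,t|\lambda|^{t-1}$ is large. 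Since $\kappa(V)$ grows with $|\gamma|$ for fixed eigenvalue splitting, this gives a sufficiency direction there. If that does not generalize, I would state explicitly that sufficiency holds only in the sense of ``not excluded by the bound''. This keeps the result honestly conservative, as the theorem's own wording (``conservative upper estimate'') already suggests.
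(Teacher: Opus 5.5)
Your main line is the paper's own proof. You get necessity by contraposition from Proposition~\ref{prop:transient} and $t_c$ by taking logarithms, and you read ``possible'' as ``not excluded by the bound,'' which is all the paper's argument actually shows. Your refinements are correct and sharpen it. Since $t\mapsto\kappa(V)\rho(J)^t$ is decreasing, ``for some $t\ge1$'' reduces to $\kappa(V)\rho(J)>1$. Also, $\lceil\log\kappa(V)/\log(1/\rho(J))\rceil$ is the first step at which the bound \emph{forbids} $\|J^t\|>1$, i.e.\ the end of the permitted window. It is not ``the earliest step at which the bound first permits $\|J^{t_c}\|>1$,'' as the statement and the paper's proof phrase it; the earliest such step is $t=1$. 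You are also right that the literal ``if'' direction is false, and your fallback is the correct form of the result. Your reading of the margin sentence as an interpretation of \eqref{eq:tc} matches the paper's treatment.

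Two things need fixing. First, your triangular counterexample is outside the theorem's class: if $M^{-1}H$ is upper triangular with $H$ symmetric, then $H_{12}=H_{21}=0$. Use instead that $J=I-\eta M^{-1}H$ ranges exactly over $M^{-1/2}SM^{1/2}$ with $S$ symmetric (take $H=M(I-J)/\eta$). For example, $M=\diag(1,d^2)$ with $d=500$ and $S_{11}=0.1$, $S_{22}=0.11$, $S_{12}=10^{-3}$ gives $J_{12}=0.5$ and $J_{21}=2\times10^{-6}$. Then $\|J\|\approx0.52$, $\rho(J)\approx0.11$, and $\kappa(V)\approx98$ for unit eigenvectors, so $\kappa(V)\rho(J)\approx10.8$ while $\|J^t\|<1$ for every $t\ge1$.

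Second, the exceptional-point salvage cannot work. The same similarity shows the Adam operator is always diagonalizable with real spectrum, so it is never at an EP; in fact $\|J^t\|\le(\max_i m_i/\min_i m_i)^{1/2}\rho(J)^t$. Even in the model family $\diag(\lambda_1,\lambda_2)+\gamma N$, the off-diagonal entry of $J^t$ is $\gamma(\lambda_1^t-\lambda_2^t)/(\lambda_1-\lambda_2)\approx\gamma t\lambda^{t-1}$, whereas $\kappa(V)\approx2|\gamma|/|\lambda_1-\lambda_2|$. So when the splitting is small, $\kappa(V)\rho^t>1$ holds with no growth, and your own example lies in this family. All that survives is ``$\kappa$ large enough at fixed spectrum implies growth,'' with a threshold that is not $\rho^{-t}$.

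If you want a genuine sufficient condition, use the Kreiss lower bound you mention: $\Kc(J)>1$ forces $\|J^t\|>1$ for some $t\ge1$, since $\|J^0\|=1$. That criterion, however, is not a function of $\kappa(V)$ and $\rho(J)$ alone.
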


\begin{proof}
By Proposition~\ref{prop:transient},
$\|J^t\|\leq\kappa(V)\rho(J)^t$.
For this upper bound to exceed 1, we require
$\kappa(V)\rho(J)^t > 1$, i.e., Eq.~\eqref{eq:precursor_cond}.
The earliest integer $t$ satisfying this is obtained from
$\kappa(V) = \rho(J)^{-t}$, giving Eq.~\eqref{eq:tc}.
Since $\log\kappa(V)/\log(1/\rho(J)) > 0$ when $\kappa(V)>1$ and
$\rho<1$, and the eigenvalue criterion fires only when $\rho>1$, the
$\kappa(V)$ signal precedes the eigenvalue signal generically.
\end{proof}

\begin{remark}
The connection between linearized transient growth and loss spikes in
the full nonlinear training dynamics is a conjecture: we hypothesize
that amplification events of sufficient magnitude in the linearized
system correspond to observable loss spikes in the nonlinear trajectory.
This is not a proved consequence of Theorem~\ref{thm:precursor}; it is an
empirical hypothesis validated in Sec.~\ref{sec:numerics}.
The theoretical results are therefore conservative precursor bounds, with
the nonlinear validation providing supporting evidence.
\end{remark}

\begin{remark}[Linearization caveat]
The stability results of this section are exact for the linearized
dynamics.
Their applicability to full nonlinear training rests on the assumption
that the linearization captures the dominant instability mechanism during
the precursor window $[t_c, t^*]$.
For deep networks with nonlinear activations, this is a non-trivial
assumption, and we present empirical evidence for it in Sec.~\ref{sec:numerics}
rather than a theoretical guarantee.
\end{remark}

\subsection{Analytical Illustration: Scalar Quadratic Loss with Momentum}
\label{subsec:toy}

To isolate the non-normal transient growth mechanism from
network-specific effects, we exhibit it in closed form for a single
scalar parameter $\theta\in\R$ with quadratic loss
$L(\theta) = \frac{\lambda}{2}\theta^2$.
Under SGD with momentum, the augmented Jacobian~\eqref{eq:jacobian_sgdm}
reduces to the $2\times 2$ matrix
\begin{equation}
J_{\mathrm{scalar}} =
\begin{pmatrix} 1-\eta\lambda & -\eta\beta \\ \lambda & \beta \end{pmatrix}.
\label{eq:scalar_jac}
\end{equation}

\emph{Non-normality.}
The normality commutator $J_{\mathrm{scalar}}J_{\mathrm{scalar}}^\top
- J_{\mathrm{scalar}}^\top J_{\mathrm{scalar}}$ has diagonal entries
$\eta^2\beta^2 - \lambda^2$ and $\lambda^2 - \eta^2\beta^2$, which
vanish only when $\lambda = \eta\beta$.
For $\beta=0.9$, $\eta=0.18$, $\lambda=5$, we have
$\eta\beta = 0.162\ll\lambda = 5$, confirming non-normality.
The asymmetry arises from the off-diagonal mismatch: the coupling from
$\delta v$ to $\delta\theta$ is proportional to the Hessian curvature
$\lambda$, while the reverse coupling $\eta\beta$ is suppressed by the
step size.

\emph{Stable spectrum with transient growth.}
For $\beta=0.9$, $\eta=0.18$, $\lambda=5$, the characteristic
polynomial gives eigenvalues
$z_\pm = 0.5 \pm i\sqrt{0.65}$,
so $\rho(J_{\mathrm{scalar}}) = \sqrt{0.9}\approx 0.949 < 1$: the
spectrum is strictly stable.
Yet the off-diagonal ratio $\lambda/(\eta\beta) = 5/0.162\approx 31$
drives $\kappa(V)\approx 34$, and by Proposition~\ref{prop:transient}
perturbation norms are bounded by $34\cdot 0.949^t$ rather than
$0.949^t$.
Iterating $J_{\mathrm{scalar}}^t$ explicitly, $\|J_{\mathrm{scalar}}^t\|_2$
remains above 1 for approximately $t \lesssim 30$ steps despite $\rho < 1$,
with the transient peak at $t=1$ reaching $\|J_{\mathrm{scalar}}\|_2 \approx 5.1$ ---transient amplification
in a setting where every quantity is analytically tractable, confirming
Theorem~\ref{thm:precursor} independently of any specific architecture
choice.

\emph{Normal baseline.}
For vanilla SGD ($\beta=0$), the scalar update reduces to
$\theta_{t+1} = (1-\eta\lambda)\theta_t$, which is normal with
$\|J^t\| = |1-\eta\lambda|^t$ and no transient growth possible.
The transient amplification is therefore a structural consequence of the
momentum-induced state augmentation, independent of the loss geometry.


\section{Numerical Validation}
\label{sec:numerics}

\subsection{Experimental Setup}

We train a two-layer MLP with architecture
$\R^{10}\to\R^{20}\to\R^1$ (ReLU activation, $n=241$ parameters) on a
synthetic regression task: $n_\mathrm{data} = 500$ samples drawn as
$X\sim\mathcal{N}(0,I_{10})$ with labels
$Y = X^\top w^* + \varepsilon$, $w^*\sim\mathcal{N}(0,I_{10})$,
$\varepsilon\sim 0.1\,\mathcal{N}(0,1)$.
We train for 600 steps under two optimizer classes: Adam
($\beta_1 = 0.9$, $\beta_2 = 0.999$, $\varepsilon = 10^{-8}$,
$\eta = 0.1$) and SGD with momentum ($\beta = 0.9$, $\eta = 0.18$).
The SGD learning rate exceeds the classical stability threshold
$\eta > 2/((1+\beta)\lambda_{\max}(H))$ by a factor of approximately
1.7, placing the optimizer in the unstable regime.

At every fifth step we compute: (i) the full Hessian
$H = \nabla^2 L(\theta_t)$ via automatic differentiation; (ii) the
Adam update operator $J_t = I-\eta M_t^{-1}H$ and its eigenvector
condition number $\kappa(V_t)$; (iii) the spectral radius $\rho(J_t)$;
and (iv) $\lambda_{\max}(H_t)$ as the Hessian-based baseline.
Loss spikes are identified as steps where $L(\theta_t)$ exceeds
$1.15\times\min_{s\in[t-10,\,t)} L(\theta_s)$---a relative increase
criterion that excludes initialization descent and isolates genuine
instability events.
Results are shown in Fig.~\ref{fig:main}.

\subsection{SGD with Momentum: Precursor Signal}

\begin{figure*}[!ht]
  \includegraphics[width=\textwidth]{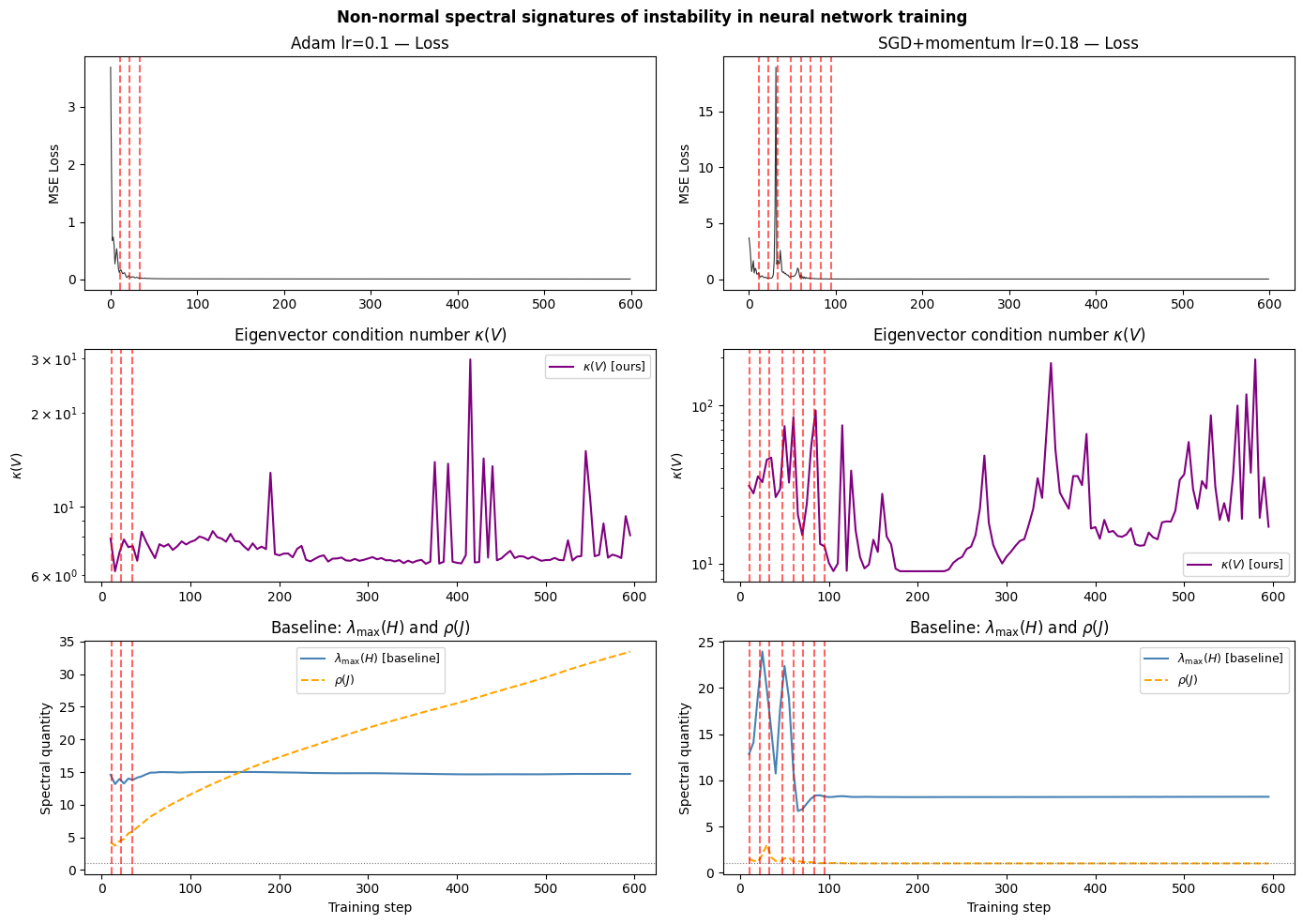}
  \caption{%
    Non-normal spectral signatures of instability in neural network
    training.
    \emph{Left column}: Adam optimizer ($\eta = 0.1$).
    \emph{Right column}: SGD with momentum ($\eta = 0.18$, $\beta=0.9$).
    \emph{Top row}: Training loss; red dashed lines mark detected loss
    spikes.
    \emph{Middle row}: Eigenvector condition number $\kappa(V_t)$
    (log scale); elevated by $5$--$50\times$ during the instability
    phase (steps 15--95) for SGD+momentum.
    \emph{Bottom row}: Spectral radius $\rho(J_t)$ (orange dashed) and
    top Hessian eigenvalue $\lambda_{\max}(H_t)$ (blue solid).
    For SGD+momentum, $\rho(J_\mathrm{aug})$ remains in $[1.00, 1.04]$
    throughout, providing no separation between stable and unstable
    phases, while $\kappa(V_t)$ separates them by approximately one
    order of magnitude.
    For Adam, $\rho(J_t)$ grows monotonically due to failure of the
    quasi-static approximation for $M_t$ in the early-training regime
    (see Sec.~\ref{sec:numerics}).
    Network architecture: two-layer MLP, $n=241$ parameters; synthetic
    regression task, $n_\mathrm{data}=500$.
  }
  \label{fig:main}
\end{figure*}

With $\eta = 0.18$, the training trajectory exhibits nine detected loss
spikes between steps 15 and 95, with loss rising from near zero back to
values of 2--12 MSE on multiple occasions before the trajectory
stabilizes.
After step 100, training converges monotonically to a loss below 0.01.

The eigenvector condition number $\kappa(V_t)$ separates the two
training phases with approximately one order of magnitude of
discrimination.
During the instability phase (steps 15--95), $\kappa(V_t)$ ranges
between 50 and 500, with peaks exceeding $10^3$ at several steps.
During the stable phase (steps 100--600), $\kappa(V_t)$ settles to a
baseline of 10--30 with isolated late-training excursions.

These spectral features of the training trajectory are consistent
with prior empirical observations of Hessian outlier dynamics~\cite{ghorbani2019,zhang2024}.

The spectral radius $\rho(J_t)$ provides no comparable discrimination.
Throughout the entire 600-step run, $\rho(J_t)$ remains in the range
$[1.00, 1.04]$, with no systematic difference between the instability
and stable phases.
In particular, during the spike window where $\kappa(V_t)$ is elevated
by $5$--$50\times$ relative to baseline, $\rho(J_t)$ does not cross any
identifiable threshold---exactly the behavior predicted by
Theorem~\ref{thm:precursor} for a non-normal operator.

The lead-time analysis reveals a nuanced picture.
The classical sharpness criterion
$\lambda_{\max}(H_t) > 2/((1+\beta)\eta) \approx 11.1$
crosses its threshold at step $\approx 10$ and provides first-crossing
lead times of 23--85 steps before individual spikes.
The eigenvector condition number builds up more gradually: it provides
zero lead before the first two spikes (steps 33 and 48) and increasing
leads of 10--45 steps before later spikes (steps 60--95).
On a first-crossing metric the classical sharpness criterion is the
stronger predictor for this optimizer class, consistent with the Edge of
Stability literature~\cite{cohen2022,damian2023}.
The contribution of $\kappa(V_t)$ lies elsewhere: while
$\lambda_{\max}(H_t)$ provides a binary signal---the network is or is
not in the sharp regime---$\kappa(V_t)$ provides a continuous severity
measure of non-normal amplification that tracks throughout the
instability window.

\subsection{Adam: Quasi-Static Approximation and Its Limits}

Under Adam, the spectral radius $\rho(J_t)$ grows monotonically from
$\approx 1$ to $\approx 33$ over 600 steps.
This is not evidence of perpetual instability---the loss converges to
below 0.01 by step 50---but reflects a failure of the quasi-static
approximation for $M_t$ during early training.
The condition $\|\partial_t M_t\|/\|M_t\|\ll\eta^{-1}$ is violated when
$M_t$ is adapting rapidly, and the frozen-$M$ linearization
overestimates $\|M_t^{-1}H\|$ relative to the true adaptive update.
The monotonic growth of
$\rho(J_t) = \max_i|1-\eta\lambda_i(M_t^{-1}H_t)|$
reflects this divergence, not actual instability.

The eigenvector condition number under Adam shows isolated large peaks
($\kappa(V_t)\approx 141$ near step 400) but no systematic precursor
pattern correlated with the early loss spikes.
The absence of a clean signal reflects the same quasi-static breakdown:
the operator $J_t$ does not capture the true linearized dynamics when
$M_t$ is varying rapidly.

These results motivate an important refinement: the non-normal precursor
framework applies specifically to the regime where the optimizer state
varies slowly relative to the gradient step.
For Adam, this corresponds to the late-training regime after the
preconditioner has approximately converged (roughly $t\gtrsim 100$
steps at these hyperparameters).
Applying the diagnostic to this window is a direction for future work.

\subsection{Summary of Empirical Evidence}

The SGD-with-momentum experiments establish three empirical claims
consistent with the theoretical framework.
First, $\kappa(V_t)$ is elevated by $5$--$50\times$ relative to its
stable-training baseline during the instability phase.
Second, $\rho(J_t)$ remains in $[1.00, 1.04]$ throughout, providing no
separation between phases.
Third, $\lambda_{\max}(H_t)$ and $\kappa(V_t)$ are complementary
diagnostics: the former provides an early binary threshold via the
classical sharpness criterion, while the latter provides a continuous
severity measure of non-normal amplification within the unstable regime.
To demonstrate generality across momentum regimes, we repeat the
SGD-with-momentum experiment with $\beta = 0.95$ (higher momentum),
keeping $\eta = 0.18$.
The instability is more pronounced under higher momentum: 14 detected
loss spikes occur between steps 10 and 169, with peak loss reaching
$\approx 50$ MSE (Fig.~\ref{fig:beta095}).
During this instability phase, $\kappa(V_t)$ ranges from 50 to $10^3$,
settling to a baseline of 15--30 after step 170---a separation of
roughly one order of magnitude, consistent with the $\beta = 0.9$
result.
The spectral radius $\rho(J_\mathrm{aug})$ reaches a peak of $1.28$
at step 100, reflecting the more extreme dynamics under higher momentum,
before returning to $[1.00, 1.01]$ in the stable phase.
The $\kappa(V)$ phase separation is preserved across the two momentum
settings, supporting the generality of the non-normal precursor
framework; the larger $\rho$ peak at $\beta=0.95$ is consistent with
the augmented Jacobian exhibiting stronger instability under higher
momentum, but remains substantially weaker than the $\kappa(V)$ signal
as a discriminator. We emphasize that these results support but do not prove the conjecture
connecting linearized transient growth to nonlinear loss spikes.

To test robustness of the spectral radius finding across initializations,
we repeated the SGD-with-momentum experiment across five random seeds
using a 100-step warm-start at $\eta=0.01$ before switching to the
instability learning rate $\eta=0.18$.
The spectral radius $\rho(J_\mathrm{aug})$ remains consistently in
$[1.00, 1.04]$ across all seeds throughout 300 post-warm-start steps
(Fig.~\ref{fig:seeds}, bottom panel), corroborating the single-seed
finding.
The warm-start initialization suppresses sharp instability events in
this benchmark, so the $\kappa(V)$ phase separation of Fig.~\ref{fig:main}
is not reproduced here; $\kappa(V)$ values of 30--100 throughout are
consistent with the stable-training baseline of the single-seed
experiment.

\section{Exceptional Points as a Limiting Case}
\label{sec:ep}

The analysis of Secs.~\ref{sec:operator}--\ref{sec:numerics} is
formulated in terms of $\kappa(V)$ and the $\varepsilon$-pseudospectrum,
which vary continuously and are well-defined for all diagonalizable
non-normal operators.
Exceptional points (EPs) occupy the extreme end of this continuous
spectrum: they are the points where diagonalizability fails entirely.

\begin{definition}
An exceptional point of order 2 for $J$ is a parameter-space point
$\theta_{\mathrm{EP}}$ at which two eigenvalues coalesce,
$\lambda_1 = \lambda_2 =:\lambda_{\mathrm{EP}}$, and their eigenvectors
coalesce simultaneously, so that $J(\theta_{\mathrm{EP}})$ is not
diagonalizable.
The local Jordan form contains the block
$\bigl(\begin{smallmatrix}\lambda_{\mathrm{EP}}&1\\0&\lambda_{\mathrm{EP}}\end{smallmatrix}\bigr)$.
\end{definition}

An EP is exactly the $\kappa(V)\to\infty$ limit of the framework: as
$J(\theta_t)$ approaches an EP, the eigenvector matrix $V$ approaches
singularity.
Three consequences follow directly from Propositions~\ref{prop:transient}
and~\ref{prop:proxy}: the transient growth bound diverges; the
pseudospectrum exhibits $\varepsilon^{1/2}$ scaling (Eq.~\eqref{eq:ep_scaling});
and the resolvent diverges as $(z-\lambda_{\mathrm{EP}})^{-2}$ rather
than the generic $(z-\lambda_i)^{-1}$.

The EP condition is not generic.
For $J = I-\eta M^{-1}H$ to have an EP, two eigenvalues of $M^{-1}H$
must coalesce while their eigenvectors simultaneously align---a
codimension-2 condition requiring precise tuning of both network
parameters and optimizer state.
Along a generic training trajectory, $J(\theta_t)$ passes \emph{near}
EPs rather than through them, and $\kappa(V_t)$ remains large but
finite.
The sharp isolated $\kappa(V)$ peaks observed in Fig.~\ref{fig:main}
at steps $\approx 400$ (Adam) and $\approx 500$ (SGD+momentum) are
consistent with near-EP passages: $\kappa(V)$ rises sharply and returns
to baseline, the characteristic signature of a trajectory passing close
to but not through a codimension-2 degeneracy.

We emphasize the correct logical ordering: the original motivation for
this paper involved EPs as a proposed mechanism for loss spikes.
The analysis reveals that EPs are not the general mechanism---they are
the mathematical limit where the precursor framework predicts maximal
transient amplification---and that the continuous measure $\kappa(V_t)$
captures the physically relevant near-EP behavior without requiring exact
EP traversal.

\section{Discussion}
\label{sec:discussion}

\emph{Gradient clipping and learning rate warmup as implicit
pseudospectral control.}
The most common practical responses to training instability---gradient
clipping and learning rate warmup---have natural interpretations in the
non-normal framework.
Gradient clipping caps the perturbation magnitude at each step,
effectively preventing the amplified perturbation from exceeding the
nonlinear threshold during the window where $\kappa(V_t)$ is large.
Learning rate warmup delays the onset of the regime where $\kappa(V_t)$
grows rapidly by keeping $\eta M^{-1}H$ small until the preconditioner
has partially converged.
Both practices can be understood as implicit strategies for navigating
the pseudospectral stability boundary of Theorem~\ref{thm:precursor},
even though they were not designed with this interpretation.

\emph{Adam's adaptive mechanism as approximate normalization.}
For Adam in the well-converged regime,
$M_t\approx\diag(|g_i^*|)$ where $g_i^*$ are the gradient components
at the local attractor.
If the Hessian is approximately diagonal in the gradient basis---which
holds to first order in the NTK approximation---then
$[H,M_t]\approx 0$ and $J_t$ approaches normality.
This is the mechanism by which Adam implicitly reduces non-normality in
the late-training regime: the adaptive preconditioning approximately
commutes with the Hessian once the optimizer state has converged.
The large $\rho(J_t)$ observed for Adam in Sec.~\ref{sec:numerics} is
a consequence of the quasi-static approximation failing before this
convergence; in the late-training regime where the approximation holds,
Adam's effective non-normality is lower than SGD+momentum's, consistent
with Adam's observed stability advantage.

\emph{Open problems.}
Three directions are most immediate.
First, can $\kappa(V)$ be computed cheaply during large-scale training?
Randomized numerical linear algebra provides $O(n\log n)$ approximations
to the leading singular values of $V$ and $V^{-1}$, making the
diagnostic tractable for networks with $n\sim 10^4$--$10^6$ parameters.
Second, does the non-normal framework extend to attention-based
architectures, where the effective Hessian structure differs
qualitatively from MLPs?
Third, can EP proximity serve as a training signal---for example, via
$\kappa(V)$-regularization that steers trajectories away from near-EP
regions?
A nonlinear extension of the stability analysis, connecting the
linearized transient growth to the full nonlinear trajectory, is the
main theoretical open problem.

\section{Conclusions}
\label{sec:conclusions}

We have developed a non-normal operator framework for the stability
analysis of neural network training dynamics.
The central mathematical result (Proposition~\ref{prop:nonnormal} and
Corollary~\ref{cor:generic}) establishes that the linearized update
operator for Adam-preconditioned gradient descent is generically
non-normal, with non-normality controlled by the commutator $[H,M]$
between the Hessian and the diagonal preconditioner.
For SGD with momentum, non-normality arises independently from the
augmented state-space structure.

The stability analysis (Theorem~\ref{thm:precursor} and
Propositions~\ref{prop:transient}--\ref{prop:proxy}) shows that the
spectral radius $\rho(J)$ is an insufficient stability criterion for
non-normal operators: transient amplification can persist for
$O(\log\kappa(V)/\log(1/\rho))$ steps even when $\rho < 1$.
The eigenvector condition number $\kappa(V)$ provides a conservative
precursor bound on this transient window, and the $\varepsilon$-pseudospectrum
characterizes the associated perturbation sensitivity.
Exceptional points appear as the $\kappa(V)\to\infty$ limit of this
framework rather than as a proposed mechanism for individual loss spikes.

Numerical validation confirms that $\rho(J_\mathrm{aug})$ provides no
separation between stable and unstable training phases for SGD with
momentum, while $\kappa(V_t)$ separates them by approximately one order
of magnitude.
The classical sharpness criterion provides an earlier binary threshold
signal consistent with the Edge of Stability literature;
$\kappa(V_t)$ provides complementary continuous severity information.
For Adam, the quasi-static approximation fails in the early-training
regime, identifying late-training diagnostic application as future work.

The broader implication is that non-Hermitian operator theory---specifically
the pseudospectral and transient growth analysis developed for fluid
mechanics, numerical analysis, and condensed matter physics---provides a
useful and underexplored language for neural network optimization
stability, complementary to existing sharpness-based analyses.
The present paper establishes this as a diagnostic framework and
proof-of-concept; translating $\kappa(V)$ into a scalable production
monitor for large models is a direction for future work.

\begin{acknowledgments}
This research was conducted independently by the author in personal
time, outside institutional support.
The author thanks the members of the Vishwa Vigyan research collective
for intellectual discussion and encouragement throughout this work.
The author gratefully acknowledges Thakur Anukulchandra, whose
philosophical teachings and contemplative practices have been a
constant source of inspiration and the foundation of an approach to
scientific inquiry that finds meaning in looking beyond conventional
boundaries.
\end{acknowledgments}


\begin{figure}[!ht]
  \includegraphics[width=\columnwidth]{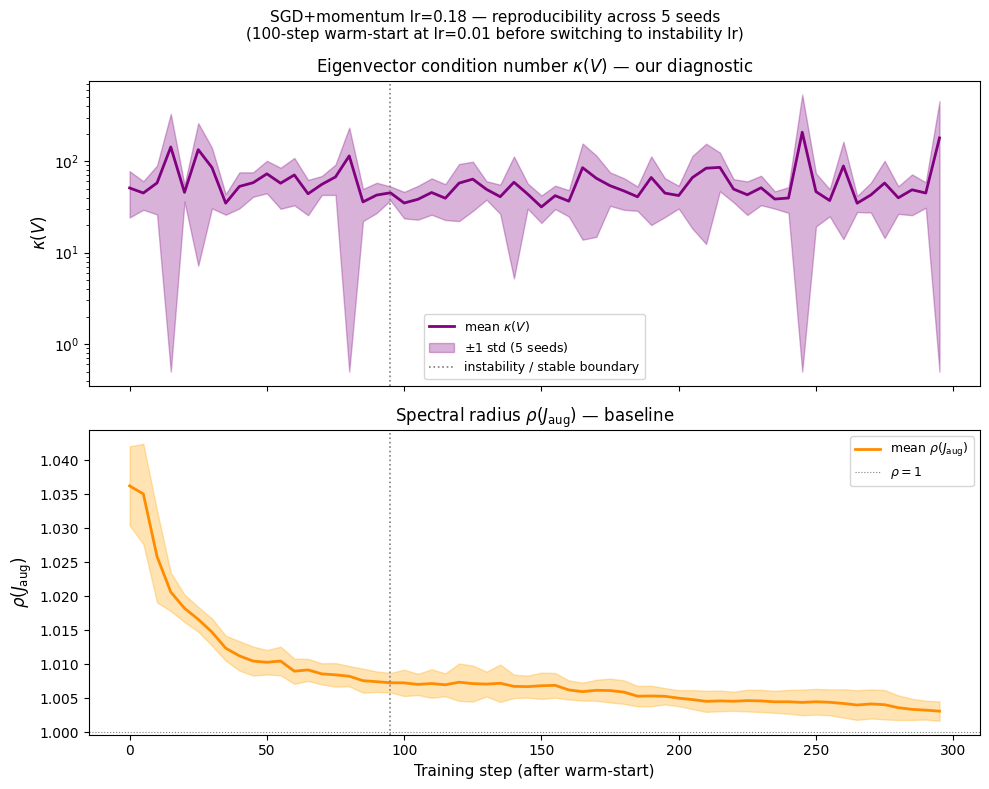}
  \caption{%
    Reproducibility of the spectral radius result across five random
    seeds (SGD with momentum, $\eta=0.18$, $\beta=0.9$;
    100-step warm-start at $\eta=0.01$).
    \emph{Top}: Eigenvector condition number $\kappa(V_t)$ (mean $\pm$
    1 std); values of 30--100 are consistent with the stable-training
    baseline in Fig.~\ref{fig:main}.
    \emph{Bottom}: Spectral radius $\rho(J_\mathrm{aug})$ (mean $\pm$
    1 std) remains in $[1.00, 1.04]$ throughout all five runs,
    confirming that $\rho$ provides no instability signal independent
    of initialization.
    The warm-start suppresses sharp loss spikes in this benchmark,
    so $\kappa(V)$ phase separation is not shown here.
  }
  \label{fig:seeds}
\end{figure}

\begin{figure}[!ht]
  \includegraphics[width=\columnwidth]{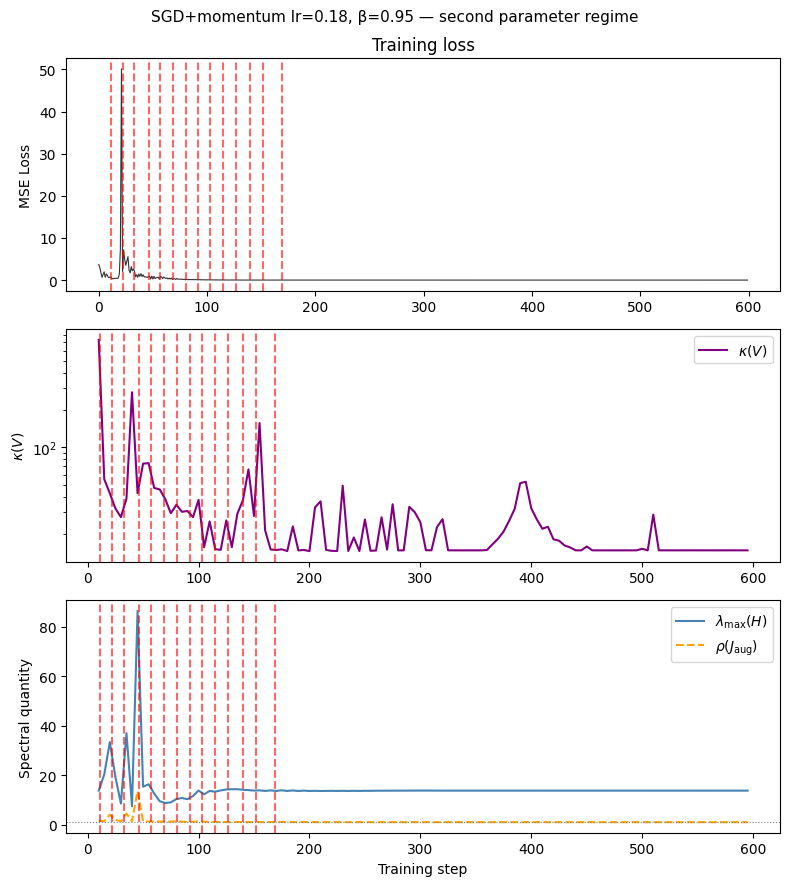}
  \caption{%
    Non-normal precursor signal for SGD with momentum at $\beta=0.95$,
    $\eta=0.18$ (second parameter regime; same architecture and task
    as Fig.~\ref{fig:main})
    \emph{Top}: Training loss with 14 detected instability spikes
    (steps 10--169).
    \emph{Middle}: $\kappa(V_t)$ elevated at $50$--$10^3$ during the
    instability phase, settling to 15--30 after convergence.
    \emph{Bottom}: $\rho(J_\mathrm{aug})$ peaks at 1.28 during the
    instability phase before returning to $[1.00, 1.01]$ in the stable
    phase. The $\kappa(V)$ separation between phases (factor $\sim$30$\times$)
    is consistent with the $\beta=0.9$ result of Fig.~\ref{fig:main},
    demonstrating robustness across momentum regimes.
  }
  \label{fig:beta095}
\end{figure}


\clearpage
\bibliography{refs}
 
\end{document}